\def\tsc#1{\csdef{#1}{\textsc{\lowercase{#1}}\xspace}}
\newtheorem{theorem}{Theorem}
\newproof{pot}{Proof of Theorem \ref{thm}}
\newtheorem{definition}{Definition}%
\newcommand{\hlColor}{black}
\begin{document}

\doublespacing

\let\WriteBookmarks\relax
\def\floatpagepagefraction{1}
\def\textpagefraction{.001}

% Short title
\shorttitle{Explainable Classification of Astronomical Uncertain Time Series}    

% Short author
\shortauthors{Mbouopda et al.}  

% Main title of the paper
\title [mode = title]{Explainable Classification of Astronomical Uncertain Time Series}  

% Title footnote mark
% eg: \tnotemark[1]
%\tnotemark[<tnote number>] 

% Title footnote 1.
% eg: \tnotetext[1]{Title footnote text}
% \tnotetext[<tnote number>]{<tnote text>} 

% First author
%
% Options: Use if required
% eg: \author[1,3]{Author Name}[type=editor,
%       style=chinese,
%       auid=000,
%       bioid=1,
%       prefix=Sir,
%       orcid=0000-0000-0000-0000,
%       facebook=<facebook id>,
%       twitter=<twitter id>,
%       linkedin=<linkedin id>,
%       gplus=<gplus id>]

\author[1]{Michael Franklin Mbouopda}[orcid=0000-0003-1916-1464]

% Corresponding author indication
\cormark[1]

% Footnote of the first author
%\fnmark[<footnote mark no>]

% Email id of the first author
\ead{michael_franklin.mbouopda@doctorant.uca.fr}

% URL of the first author
\ead[url]{https://frankl1.github.io}

% Credit authorship
% eg: \credit{Conceptualization of this study, Methodology, Software}
\credit{Conceptualization of this study, Data understanding, Methodology, Experiment, Result analysis, Software, Writing}

% Address/affiliation
\affiliation[1]{organization={University Clermont Auvergne, Clermont Auvergne INP, ENSMSE, CNRS, LIMOS},
%            addressline={1 Rue de la Chebarde}, 
           city={Clermont–Ferrand},
%          citysep={}, % Uncomment if no comma needed between city and postcode
            postcode={63000}, 
%            state={Auvergne-Rhone-Alpes},
            country={France}}

\author[2]{Emille E. O. Ishida}[orcid=0000-0002-0406-076X]

% Footnote of the second author
%\fnmark[2]

% Email id of the second author
\ead{emille.ishida@clermont.in2p3.fr}

% URL of the second author
\ead[url]{https://www.emilleishida.com/}

% Credit authorship
\credit{Data understanding, result analysis, writing}

% Address/affiliation
\affiliation[2]{organization={University Clermont Auvergne, CNRS/IN2P3, LPCA},
  %          addressline={4 Avenue Blaise Pascal}, 
             city={Clermont-Ferrand},
%          citysep={}, % Uncomment if no comma needed between city and postcode
            postcode={63000}, 
%            state={Auvergne-Rhone-Alpes},
            country={France}}

\author[1]{Engelbert {Mephu Nguifo}}[orcid=0000-0001-9119-678X]

% Footnote of the second author
%\fnmark[2]

% Email id of the second author
\ead{engelbert.mephu_nguifo@ucaf.fr}

% URL of the second author
\ead[url]{https://perso.isima.fr/~enmephun/}

% Credit authorship
\credit{Conceptualization of this study, Data understanding, Result analysis, Writing}

% Address/affiliation
%\affiliation[3]{organization={University Clermont Auvergne, Clermont Auvergne INP, ENSMSE, CNRS, LIMOS},
%	addressline={49 bd François Mitterrand}, 
%	city={Clermont-ferrand},
%	%          citysep={}, % Uncomment if no comma needed between city and postcode
%	postcode={63001}, 
%	state={Auvergne-Rhone-Alpes},
%	country={France}}

\author[2]{Emmanuel Gangler}[orcid=0000-0001-6728-1423]

% Footnote of the second author
%\fnmark[2]

% Email id of the second author
\ead{emmanuel.gangler@clermont.in2p3.fr}

% URL of the second author
\ead[url]{https://annuaire.in2p3.fr/2113-2297/emmanuel-gangler}

% Credit authorship
\credit{Data preparation, Data understanding, Result analysis, Writing}

% Corresponding author text
\cortext[1]{Corresponding author}

% Footnote text
%\fntext[1]{}

% For a title note without a number/mark
\nonumnote{This work is funded by the French Ministry of Higher Education, Research and Innovation. Thanks to the TransiXplore project team which helped us understanding and preprocessing the PLAsTiCC dataset and thanks to the Large Synoptic Survey Telescope (LSST) project which published the dataset. Thanks to the anonymous reviewers for their constructive remarks. We are also grateful to authors of public softwares and data sets used in our experimentation.}

% Here goes the abstract
\begin{abstract}
Exploring the expansion history of the universe, understanding its evolutionary stages, and predicting its future evolution are important goals in astrophysics. Today, machine learning tools are used to help achieving these goals by analyzing transient sources, which are modeled as uncertain time series. Although \textit{black-box} methods achieve appreciable performance, existing interpretable time series methods failed to obtain acceptable performance for this type of data. Furthermore, data uncertainty is rarely taken into account in these methods. In this work, we propose an uncertainty-aware subsequence based model which achieves a classification comparable to that of state-of-the-art methods. Unlike conformal learning which estimates model uncertainty on predictions, our method takes data uncertainty as additional input. Moreover, our approach is explainable-by-design, giving  domain experts the ability to inspect the model and explain its predictions. The explainability of the proposed method has also the potential to inspire new developments in theoretical astrophysics modeling by suggesting important subsequences which depict details of light curve shapes. The dataset, the source code of our experiment, and the results are made available on a public repository.
\end{abstract}

% Use if graphical abstract is present
%\begin{graphicalabstract}
%\includegraphics{}
%\end{graphicalabstract}

% Research highlights
\begin{highlights}
\item We proposed an accurate and explainable-by-design method for uncertain time series classification.
\item We showed the effectiveness and trustworthiness of our method on a real uncertain time series dataset from the astrophysics domain. 
\item We open-sourced the data and the code used in our study.
\end{highlights}

% Keywords
% Each keyword is seperated by \sep
\begin{keywords}
 Time series \sep Classification \sep Explainability \sep Uncertainty \sep Astronomy \sep Photometry \sep Light curve.
\end{keywords}

\maketitle

\doublespacing

\section{Introduction}
Machine learning (ML) has become an ineluctable tool for analyzing and extracting meaningful information from data. Classically exclusively applied on tabular data, it is nowadays also effective on image, video, text, and also time series data. The latter is the type of data we will focus on in this paper. Specifically, this work is about \textit{time series classification}, a ML task whose goal is to learn a function (i.e a classifier) that maps time series to a set of discrete classes. A time series is an ordered and finite sequence of values. Some examples of time series are  the daily COVID cases and the monthly groundwater level. Time series classification has been applied in several domains including online harassment detection \citep{janiszewski2021time}, medicine \citep{miller2022radar,tan2020data}, emotion recognition \citep{rafique2021deep}, anomaly detection \citep{shen2021time}, and in physics \citep{team2018photometric,leoni2021Fink}. This usability is facilitated by toolkits such as Sktime \cite{loning2019sktime}, which unifies the existing time series classification algorithms under the same user-friendly API. However, the existing methods are generally not applicable to \textit{uncertain} time series; In fact, as far as we know, the uncertain shapelet transform (or simply UST) method \citep{mbouopda2020ust} is the only one that has been designed for uncertain time series classification. 

An uncertain time series (uTS) is a time series of \textit{imprecise} values. Unlike a \textit{regular} time series which is an ordered sequence of real numbers, an uTS is a sequence of pairs of numbers such that the first number of a pair is the best estimate and the second one is the error on that estimate; therefore the exact values of an uTS are unknown.  Figure \ref{fig:sample-chinatown} illustrates a simulated uTS: the blue line is the best estimate and the vertical red bars represent the uncertainty intervals (i.e the exact unknown values are somewhere on the vertical red bars). Figure \ref{fig:sample-plasticc67} is a real uTS extracted from the PLAsTiCC \citep{team2018photometric} dataset: any time series that lies in the red region could be the exact unknown time series. \textcolor{\hlColor}{Uncertain time-series (uTS) classification should not be conflated with conformal learning, as the two address fundamentally different sources and roles of uncertainty. Conformal learning typically assumes that the input data are deterministic and free of measurement noise; its primary objective is to quantify and control the uncertainty of the model’s predictions, for instance by constructing calibrated confidence sets. In contrast, uTS classification seeks to learn a classifier from uncertain data and for data that remain uncertain at prediction time—meaning that uncertainty is intrinsic to the input rather than the output. Despite its practical relevance, uTS classification remains comparatively under-explored, whereas uncertainty estimation more broadly is an active and rapidly evolving field in machine learning \citep{cleaveland2024conformal,cadiz2025uncertainty,nanopoulos2025conformal}. In this work, we directly address the uTS classification problem by developing methods that operate natively on uncertain inputs and explicitly account for their underlying uncertainty.}

\begin{figure}
	\centering
	\begin{subfigure}[c]{.45\linewidth}
		\centering
		\includegraphics[width=.8\linewidth]{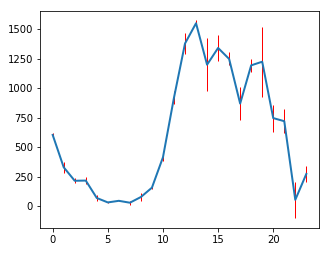}
		\caption{A simulated uTS}
		\label{fig:sample-chinatown}
	\end{subfigure}
	\begin{subfigure}[c]{.45\linewidth}
		\centering
		\includegraphics[width=.8\linewidth]{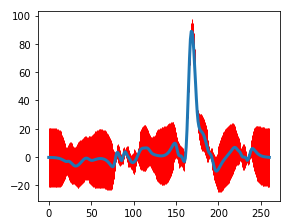}
		\caption{An uTS from PLAsTiCC}
		\label{fig:sample-plasticc67}
	\end{subfigure}
	\caption{Uncertain time series illustrations. \textcolor{\hlColor}{The x-axis is the timestamp while the y-axis represents the value of the time series at the corresponding timestamp. The red bars represent the uncertainty of the measured value.}}\label{fig:uts-ullustration}
\end{figure}

Uncertain time series are preponderant in transient astrophysics. Astronomical objects whose brightness vary with time (a.k.a transients) are primarily characterized by the presence or absence of specific chemical elements found in their spectra. This data taking process (called spectroscopy) is very time-consuming and requires very good observation conditions to be performed. Moreover, since transients are objects which appear in the sky for a limited period of time then disappear forever, there is a small time-window of opportunities when such measurements can be taken. 

Alternatively, we can also associate different classes of astronomical transients to the respective shape of their light curves (brightness variation as a function of time). In this case, we need to repeatedly measure the brightness of the source in a relatively broad region of the wavelength spectrum. This process, called photometry, is less expensive and imposes more manageable constraints on observation conditions. However, measurements are more prone to uncertainties (due to moonlight, twilight, clouds, etc) in the flux determination and the distinction between light curves from different classes is subtle, resulting in less accurate classifications. %less accurate. 
Nevertheless, since there is not enough spectroscopic resources to provide definite label for all photometric observed objects, being able to effectively analyze uncertain photometric light curves means that a wider range of the universe can be quickly understood and at a lower cost.

%by their light curves (i.e time series) which are obtained using two methods: spectroscopy and photometry. Spectroscopy provides accurate and reliable light curves but is extremely time consuming. On the other hand, photometry is a faster method that can characterize objects that are further away, much fainted and can be used to observe many objects at the same time; however it is less accurate and provides light curves that are degraded by moonlight, twilight, clouds and atmospheric effects. This degradation leads to uncertainty on the measured light curves \cite{team2018photometric}. Being able to effectively analyze photometric light curves means that a wider range of the universe can be quickly understood.    

The Vera C. Rubin Observatory\footnote{\url{https://lsst.org/}} is a ground-based observatory, currently under construction in Chile, whose goal is to conduct the 10-year Legacy Survey of Space and Time (LSST) in order to produce the deepest and widest images of the universe. Equipped with a 3.2 gigapixels camera, the telescope will observe 32 billion stars and galaxies, and is expected to produce about 20 terabytes of data per night. The observatory is expected to start producing data in early 2024, and in order to prepare the community for the arrival of its data, one important data challenge was put in place: the Photometric LSST Astronomical Time-Series Classification Challenge or simply PLAsTiCC \citep{team2018photometric}. The goal is to identify %build 
machine learning models able to classify $14$ types of transients in simulated data, represented by uncertain time series, or %of 
light curves. The ultimate goal behind the challenge was to understand which methods are expected to perform  better in LSST-like data, thus preparing the community to the arrival of its data and 
%identify methods which could 
help understanding the universe's expansion history. %and predict how it will evolve in the future,
Therefore, using interpretable approaches is very important. However,  existing contributions focused on minimizing the classification loss by employing techniques such as mixture of classifiers and data augmentation \citep{hlovzek2020results} while neglecting explainability. In this paper, we address this problem with explainability in mind. 

We consider two approaches to classify uTS in an explainable manner :  the first one ignores uncertainty and uses only the best estimates, while the second one takes uncertainty into account. Ignoring uncertainty makes the task a \textit{regular} time series classification task, allowing the usage of Shapelet Transform Classification or simply STC \citep{Hills2014stc}, an effective and explainable \textit{regular} time series classification algorithm. This model failed to find any valid shapelet on PLAsTiCC, and therefore could not perform the classification task. We performed extensive hyper-parameter tuning tests, but the result was the same. We also tried to take uncertainty into account by using the Uncertain Shapelet Transform algorithm \citep{mbouopda2020ust}, but as expected, this method also failed since it is an extension of STC for uncertain time series.

In this paper, we propose the Uncertain Scalable and Accurate Subsequence Transform (or uSAST for short) method which is able to achieve an F1-score of $70\%$  while providing faithful explanation similarly to STC. Moreover, this paper is the first of its kind to present an open and reproducible experiment on a real uncertain time series dataset. The rest of this paper is organized as follows: we start by presenting the background and the related works. We continue by describing the uSAST method. Finally, we detail our experiments and the obtained results before concluding this work.

\section{Background}\label{sec:backgound}
%In this section, we define the notions used in this work.

\begin{definition}[Time series] A time series (TS) of length $m$ is a finite sequence of ordered values.
	\[
	T=(t_1, t_2,.., t_m), t_i \in \mathbb{R}, m>0
	\]
\end{definition}

\begin{definition}[Uncertain time series] An uncertain time series (uTS) is defined similarly to a time series, but each value has an uncertainty represented by a positive real number.
	\[
	T=(t_1 \pm \delta t_1, t_2 \pm \delta t_2,.., t_m \pm \delta t_m), t_i \in \mathbb{R}, m>0, \delta t_i \in \mathbb{R}_+
	\]
\end{definition}

\begin{definition}[Subsequence] A subsequence (respectively an uncertain subsequence) is a sequence of consecutive values extracted from a TS (respectively an uTS).
\end{definition}

\begin{definition}[Distance] \label{def:distance}The distance between a subsequence $S$ of length $l$ and a time series of length $m$ is defined as follows:
	\begin{equation}
		Dist(S, T) = \min_{P\in T^l} dist(S, P)
	\end{equation}
	where,
	\begin{equation}
		T^l=\{(t_{i}, t_{i+1},...,t_{i+l}) \vert \; 1 \le i \le m-l+1\} \nonumber
	\end{equation}
\end{definition}
The $dist(\cdot,\cdot)$ function in Definition \ref{def:distance} could be any distance metric. In practice the Euclidean Distance (ED) and the Dynamic Time Warping (DTW) are generally used.  The definition is also applicable between uTS and uncertain subsequence by ignoring the uncertainty or by taking it into account using an uncertain distance, the UED distance \citep{mbouopda2020ust}. 

\begin{definition}[Uncertain Euclidean Distance] The Uncertain Euclidean Distance (UED) between two uncertain subsequences $S_1$ and $S_2$ of same length $l$ is defined as:
	\begin{equation}
		UED(S_1, S_2) = \sum_{i=1}^{l}(s_{1,i}-s_{2,i})^2 \pm 2\sum_{i=1}^{l}|s_{1,i}-s_{2,i}| (\delta{s_{1,i}} + \delta{s_{2,i}})
	\end{equation}
\end{definition}

Let $D=\{(T_i, c_i) \vert 1 \le i \le n\}$ be a dataset of $n$ time series $T_i$ (repectively uncertain time series)  with their class labels $c_i$ taken from a discrete finite set $C$ such that the cardinality of $C$ is much less than $n$. We can define the notions of \textit{separator} and \textit{shapelet} for this dataset.

\begin{definition}[Separator] A separator (respectively uncertain separator) is a pair of a subsequence $S$ (respectively uncertain subsequence) and a threshold $\epsilon$ that divide the dataset in two groups $D_{left}$ and $D_{right}$ such that:
	\begin{align}
		D_{left} &= \{(T_i, c_i) \vert Dist(S, T_i) < \epsilon, 1 \le i \le n\} \nonumber\\
		D_{right} &= \{(T_i, c_i) \vert Dist(S, T_i) \ge \epsilon, 1 \le i \le n\}
	\end{align}
\end{definition}

\begin{definition}[Shapelet] A shapelet (respectively uncertain shapelet) is a separator (respectively uncertain separator) that maximizes the information gain similarly to splitting nodes in decision trees \citep{Ye2009shapelet}.
\end{definition}

\section{Related works}\label{sec:related-work}
Time series classification is performed regarding \textit{global} features, \textit{local} features, or both. Historically, only global features were considered; in particular, the classification was done using  the one nearest neighbor (1-NN)  classifier and the DTW distance. The Elastic Ensemble (EE) is an improvement of the \textit{global} features classification, obtained by ensembling several distance measures \citep{Lines2018HC}. The Fast Ensemble of Elastic Distances (FastEE) significantly reduces the computation time of the Elastic Ensemble \citep{Tan2020FastEE}.

\textit{Local} feature-based methods are organized as dictionary-based, interval-based or subsequence-based. Dictionary-based methods proceeds by representing each time series using a finite set of discrete symbols using techniques such as Symbolic Fourier Approximation (SFA) \citep{schafer2012SFA} and Symbolic Aggregate approXimation (SAX) \citep{Lin2007SAX}.  Some methods that implement these techniques are BOSS \citep{schafer2015BOSS}, MUSE \citep{schafer2017muse} and TDE \citep{Middlehurst2020TDE}. Interval-based methods assume that the whole time series is not relevant for classification, but that only some segments (i.e intervals) contain the discriminative features. The first step in these methods is the identification of the relevant intervals, then a set of features (mean, median, slope, ...) are computed for each interval and finally a supervised classifier is trained on the computed features. Some methods that use this approach are TSF \citep{Deng2013TSF}, CIF \citep{Middlehurst2020CIF} and STSF \citep{Cabello2020STSF}. Subsequence-based methods also assume that only some segments are relevant for classification, but unlike interval-based methods which use phase-dependent features (i.e. the locations of the intervals are fixed), subsequence-based methods use phase-independent features. Subsequence-based methods perform in three steps: first, the relevant subsequences are identified, then each time series is transformed to a vector of its distances to the relevant subsequences, and finally a supervised classifier is trained on the obtained vectors. Some of these methods are Shapelet-based decision trees \citep{Ye2009shapelet}, Shapelet Transform \citep{Hills2014stc}, Uncertain Shapelet Transform \citep{mbouopda2020ust} and SAST \citep{mbouopda2021sast}. Uncertain Shapelet Transform (UST) is, to our knowledge, the only subsequence-based method that supports uTS classification. UST uses an uncertain similarity measure to compare uTS. \citet{mbouopda2020ust} showed that using the Uncertain Euclidean Distance (UED) leads to better classification than existing measures such as FOTS \citep{fotso2020frobenius} and DUST \citep{sarangi2010dust}. This is explained by the fact that UED is able to propagate uncertainty, making the whole classification process uncertainty-aware.

The most accurate methods for time series classification are TS-CHIEF \citep{Shifaz2020chief}, which combines both local and global features in a tree-based classifier, HIVE-COTE \citep{Lines2018HC,Middlehurst2021HC2}, which combines several methods to extract local and global features, and ROCKET \citep{Dempster2020ROCKET,Dempster2021MiniRocket}, which employs random convolutional kernels to extract local features. These methods are also known to have explainability issues.
Subsequence-based methods are the most easier to explain; However, all of them failed on the PLAsTiCC dataset. In the next section, we will describe an accurate and interpretable method which produces competitive results on this data set and can also be easily applied to any other uTS dataset.

\section{Uncertain Subsequence Transform Classification}\label{sec:usast}
In this section, we describe a new uncertain time series classification method based on uncertainty propagation as in UST \citep{mbouopda2020ust} and subsequence transform as in SAST \citep{mbouopda2021sast}. In fact, uncertainty propagation is an effective approach to analyze uncertain data \citep{Gruber2020DWT,Liu2021copula} and particularly uncertain time series \citep{mbouopda2020ust}. By using a single random instance from each class, SAST is more scalable and at least as accurate as STC \citep{mbouopda2021sast} while keeping STC interpretability capabilities.

% \begin{figure*}[htbp]
	% 	\centering
	% 	\includegraphics[width=\textwidth]{}
	% 	\caption{Overview of the SAST classification \cite{mbouopda2021sast}}
	% 	\label{fig:sast-overview}
	% \end{figure*}

%SAST is composed of two independent blocks: the classification block which is responsible of the actual classification and the interpretability block whose role is to explain the predictions. %Figure \ref{fig:sast-overview} details how the SAST method proceeds to perform classification; 
Given a time series dataset, SAST follows four steps: \textit{i)}, one instance is randomly selected from each class: these are called reference time series; \textit{ii)} a set containing every subsequences from the selected time series is created; \textit{iii)} each instance in the dataset is replaced by the vector of its distances to each subsequence obtained in the second step; \textit{iv)} a supervised classifier is trained on the transform dataset.

Performing classification following the SAST steps could be inefficient because of the redundancy in the set of subsequences obtained at the second step. The redundancy is particularly high for small length subsequences and in datasets such as electrocardiogram (ECG) and PLAsTiCC, in which repetitive patterns occur very often. Furthermore, the third step is based on the application of Definition \ref{def:distance} using the Euclidean distance %as the $dist(\cdot, \cdot)$ function 
and, therefore, only the most similar subsequence %in the time series 
is considered; however, taking into account the number of occurrences of the best match is important in some contexts. To overcome these limitations, we define the notion of \textit{$\epsilon$-similarity} as follows:

\begin{definition}[$\epsilon$-similarity]\label{def:epsilon-similarity} Two subsequences (respectively uncertain subsequences) $S_1$ and $S_2$ of same length $l$ are $\epsilon\text{-similar}$ if the distance between them is less than or equal to a user-defined threshold $\epsilon \ge 0$.
	
	\begin{equation}
		\epsilon\text{-similar}(S_1, S_2) = \begin{cases}
			True, & \text{if } dist(S_1,S_2) \le \epsilon\\
			False, & \text{otherwise}
		\end{cases}
	\end{equation}
\end{definition}

\begin{theorem}\label{thm}
	The $\epsilon\text{-similar}$ relationship is not transitive.
\end{theorem}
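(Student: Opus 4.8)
The plan is to disprove transitivity by producing a single explicit counterexample, since a relation fails to be transitive as soon as there is one triple on which the transitivity condition breaks. Concretely, I would fix any threshold $\epsilon > 0$ and take the three length-one subsequences $S_1 = (0)$, $S_2 = (\epsilon)$, $S_3 = (2\epsilon)$, all of the common length $l = 1$, with the Euclidean distance playing the role of the generic $dist(\cdot,\cdot)$ from Definition~\ref{def:distance}.

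Next I would carry out the (trivial) verification. We have $dist(S_1, S_2) = |0 - \epsilon| = \epsilon$ and $dist(S_2, S_3) = |\epsilon - 2\epsilon| = \epsilon$, so by Definition~\ref{def:epsilon-similarity} both $\epsilon\text{-similar}(S_1, S_2)$ and $\epsilon\text{-similar}(S_2, S_3)$ evaluate to $True$. On the other hand $dist(S_1, S_3) = |0 - 2\epsilon| = 2\epsilon > \epsilon$, hence $\epsilon\text{-similar}(S_1, S_3)$ is $False$. Thus $S_1$ is $\epsilon$-similar to $S_2$ and $S_2$ is $\epsilon$-similar to $S_3$, yet $S_1$ is not $\epsilon$-similar to $S_3$, which contradicts transitivity and proves the theorem.

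To make the result robust rather than an artifact of this particular setting, I would append a short remark that the same chain survives in the variants used in the paper: for subsequences of any common length $l$ one replaces each scalar by the corresponding constant sequence, which scales all three distances by the same factor ($\sqrt{l}$ for ED, or $l$ for the squared/UED normalization) and therefore preserves both the inequalities $dist(S_1,S_2)\le\epsilon$, $dist(S_2,S_3)\le\epsilon$ and the strict inequality $dist(S_1,S_3)>\epsilon$ after rescaling the constants accordingly; the DTW case is identical because the optimal warping between two constant sequences reduces to the pointwise comparison; and the uncertain case with $UED$ is obtained by setting every $\delta s_{j,i}=0$, so that $UED$ collapses to the squared Euclidean distance and the ordering is unchanged.

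There is no genuine obstacle in this argument. The only point worth a sentence of care is that ``not transitive'' is an existential statement: it is enough to exhibit one value of $\epsilon$ and one choice of the underlying metric for which transitivity fails, and the degenerate case $\epsilon = 0$ — where $\epsilon$-similarity coincides with equality and is transitive whenever $dist$ is a metric — is not a counterexample to the theorem but simply explains why the witness must use a strictly positive threshold.
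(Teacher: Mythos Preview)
Your proof is correct and follows the same overall strategy as the paper---exhibit three points with consecutive distances~$\le\epsilon$ but outer distance~$>\epsilon$---yet the concrete witnesses differ. The paper places $X,Y,Z$ at the vertices of a right triangle in~$\mathbb{R}^l$ with legs of length~$\epsilon$ meeting at~$Y$, then invokes the Pythagorean theorem to get $dist(X,Z)=\epsilon\sqrt{2}>\epsilon$. Your collinear choice $S_1=(0)$, $S_2=(\epsilon)$, $S_3=(2\epsilon)$ gives the larger gap $2\epsilon$, works already in dimension $l=1$ (the paper's perpendicularity construction implicitly needs $l\ge 2$), and avoids any appeal to geometry beyond the triangle inequality. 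Your closing remarks---lifting to constant sequences of arbitrary length, the DTW reduction, the $\delta=0$ collapse of UED, and the caveat that $\epsilon=0$ must be excluded---are additional robustness checks that the paper does not include; they are sound and make your argument slightly more general.
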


\begin{pot}
	Let $X$, $Y$, and $Z$ be three subsequences of same length $l$ such that $\epsilon\text{-similar}(X,Y)=True$ and $\epsilon\text{-similar}(Y,Z)=True$. Let us assume that the transitivity property is verified, that is $\epsilon\text{-similar}(X,Z)=True$. A counterexample is built by considering $X$, $Y$, and $Z$ as points in a high dimensional space ($\mathbb{R}^l$) such that $dist(X,Y)=dist(Y,Z)=\epsilon$, and $XY \perp XZ$. The following derivation proves the theorem:
	
	\begin{align}
		dist(X,Z) =& \sqrt{dist(X,Y)^2 + dist(Y,Z)^2} \nonumber \\
		=& \sqrt{\epsilon^2 + \epsilon^2} \nonumber \\
		=& \epsilon\sqrt{2} \nonumber \\
		>& \epsilon \nonumber \\
		\implies& \epsilon\text{-similar}(X,Z)=False\\
	\end{align}
\end{pot}

This non-transitivity property of the $\epsilon\text{-similar}$ relationship prevents considering two subsequences to be similar because there is another subsequence similar to both of them. Meaning that the similarity between each pair of subsequences must be computed independently.

Using Definition \ref{def:epsilon-similarity}, we can reduce redundancies and count subsequence frequencies in SAST. The updated SAST method, hereafter %whose we name 
SAST+, is detailed in Algorithm \ref{alg:sast+}.

\begin{algorithm}[ht]
	\caption{SAST+}
	\label{alg:sast+}
	\begin{algorithmic}[1]
		\Require $D=\{(T_1, c_1), (T_2,c_2), ..., (T_n, c_n)\}$, $k$: the number of instances to use per class, $length\_list$: the list of subsequence lengths, $C$: the classifier to use, $\epsilon$: $\epsilon\text{-similarity}$ parameter.
		
		\Comment{\textcolor{gray}{\small{Randomly select $k$ instances per class from the dataset}}}
		\State $D_c \leftarrow randomlySelectInstancesPerClass(D,k)$ \label{line:random-select}
		
		\Comment{\textcolor{gray}{\small{Generate every patterns of length in $length\_list$ from $D_c$, using $\epsilon$ to remove similar patterns}}}
		\State $S \leftarrow generateSubsequences(D_c, length\_list, \epsilon)$ \label{line:gen-can}
		
		\State $D_f \leftarrow \emptyset$ \label{line:start-transform}
		
		\For{$i \leftarrow 1 \textbf{ to } n $} \Comment{\textcolor{gray}{\small{Transformed the dataset using every patterns in $S$}}}
		\State $x_i \leftarrow []$
		
		\For{$j \leftarrow 1 \textbf{ to } |S| $} \Comment{\textcolor{gray}{\small{The procedure $distAndCount(T_i, S_j, \epsilon)$ returns $Dist(T_i, S_j)$ and the number of occurrences of the subsequence $S_j$ in $T_i$ }}}
		\State $x_i[j], x_i[j+|S|] \leftarrow distAndCount(T_i, S_j, \epsilon)$
		\EndFor
		\State $D_f \leftarrow D_f \cup \{(x_i, c_i)\}$
		\EndFor \label{line:end-transform}
		
		\State $clf \leftarrow trainClassifier(C, D_f)$ \Comment{\textcolor{gray}{\small{Train the classifier on the transformed dataset}}} \label{line:train-classifier}
		
		\State \Return{($clf$, $S$)} \Comment{\textcolor{gray}{\small{The trained classifier and the subsequences}}}
	\end{algorithmic}
\end{algorithm}

The time complexity of the SAST method is $O(N_c) + O(kN_cm^2) + O(nm^3) + O(classifier)$, where $N_c$ is the number of classes, $n$ the number of time series, $m$ the length of the time series and $k$ the number of reference time series per class \citep{mbouopda2021sast}. In practice, it is not necessary to have $k$ greater than one. Removing redundancies in SAST is done only once (during the training phase) with a theoretical time complexity of $O(km^4)$ ; counting frequencies is done while computing the distance in a constant time. Therefore, the SAST+ time complexity is $O(N_c) + O(kN_cm^2) + O(nm^3) + O(classifier) + O(km^4)$ which is asymptotically equivalent to $O(classifier) + O(km^4)$. Removing redundancies makes SAST+ much faster than SAST during inference.

Similarly to the Uncertain Shapelet Transform \citep{mbouopda2020ust}, the uncertain SAST+ (uSAST+) is obtained by using UED as the distance metric in Algorithm \ref{alg:sast+}; allowing uncertainties to be propagated to the classifier which then uses these uncertainties to learn robust decision boundaries. \textcolor{\hlColor}{More precisely, the procedure $distAndCount(T_i,S_j,\epsilon)$ uses uses UED to compute both the uncertain similarity between the uTS and each uncertain subsequence, and the number of subsequence occurrences in the uTS. Because the classifier does not natively handle uncertain inputs, each uncertain quantity is represented by two values: (i) its best estimate (e.g., the mean), and (ii) the associated uncertainty around that estimate. These two numbers and any other features (e.g., frequency) are then passed to the classifier as its input features.}

\section{Experiment}\label{sec:experiment}

\subsection{The PLAsTiCC dataset}
As far as we know, existing methods published on uTS classification have never been evaluated on real uncertain time series datasets, but solely on simulated datasets. The corresponding simulated datasets have never been made publicly accessible neither for reproducibility reasons, nor for facilitating research on uTS. In this work, we evaluate our method on a realistic publicly available uncertain time series dataset from the astrophysics domain. 

The Photometric LSST Astronomical Time-Series Classification Challenge (PLAsTiCC) dataset contains uncertain time series representing the brightness evolution of astronomical transients including supernovae, kilonovae, active galactic nuclei and eclipsing binary systems \cite{team2018photometric}, among others. Each object is represented as a multivariate uncertain time series of $6$ dimensions named \textit{u, g, r, i, z, y}, each corresponding to a particular broadband wavelength filter. After the challenge was finished, the organizers made available an updated version of the data through Zenodo\footnote{\url{https://zenodo.org/record/2539456}} with some bug fixes and the classification answers for both the training and test sets. In this work, we demonstrate our method using only uncertain time series from the training set, but the methodology is general enough to be extended to the test set. There are $7848$ transients in the dataset, grouped in $15$ different classes (14 types of transients and one additional class that contains any other type of transient) identified by numbers as shown in Table \ref{tab:class-names}. The number of objects in the classes are highly imbalanced. More specifically, the most underpopulated class has only $0.3\%$ of objects, whereas the most populated one contains $29\%$ of the objects. Furthermore, the dataset contains a lot of missing observations. We handled this with the help of astrophysicists who suggested to fill missing data using a rolling average with a window of length $5$. Missing values and corresponding error bars are replaced by the mean and standard deviation of the window. This procedure translated the original dataset into a homogeneously sampled uncertain time series. The preprocessed dataset is made public%available here
\footnote{Cleaned dataset: \url{https://drive.uca.fr/f/f0741be3fb77402f8e82/}}.

\begin{table}[htbp]
	\centering
	\caption{Class names and their identifiers in PLAsTiCC }
	\label{tab:class-names}
	\begin{tabular}{c|c}
		\toprule
		Class name & Identifier \\
		\midrule
		Point source $\mu$-lensing & 6 \\
		Tidal disruption event (TDE) & 15 \\
		Eclipsing binary event (EBE) & 16 \\
		Core-collapse supernova Type II (SNII) & 42 \\
		Supernova Type Ia-x (SNIax) & 52\\
		Mira Variable & 53 \\
		Core-collapse Supernova Type Ibc (SNIbc) & 62 \\
		Kilonova (KN) & 64 \\
		M-dwarf & 65 \\
		Supernova Type Ia-91bg (SNIa-91bg) & 67 \\
		Active galactic nucleus (AGN) & 88 \\
		Supernova Type Ia (SNIa) & 90 \\
		RR Lyrae & 92 \\
		Super Luminous Supernova (SLSN) & 95 \\
		‘Other’ class & 99 \\
		\bottomrule
	\end{tabular}
\end{table}

Our implementation uses the Python programming language and is based on the Scikit-learn machine learning library \citep{scikitlearn} and the Sktime time series dedicated machine learning library \citep{loning2019sktime}. The experiment is run on a computing node equipped with 1 Gb of RAM and an AMD EPIC 7452 processor containing 64 logical cores of 2.35 GHz frequency. The source code of our experiments and all the results we discuss in this paper are publicly available on GitHub\footnote{Source code: \url{https://github.com/frankl1/usast}}.

\subsection{Results}
%In this section, we present the obtained results. 
Since PLAsTiCC is a multivariate uncertain time series dataset, the subsequence transformation is performed on each dimension independently. The transformations from each dimension are then concatenated together to build a large matrix which is subsequently  fed to the supervised classifier. We used $80\%$ of the data %are used 
for training and the remaining is used for testing.

\subsubsection{Shapelet-based methods results:}
Shapelet-based classification is a special case of subsequence-based classification which consider only shapelets as relevant subsequences. We considered two shapelet-based methods STC \citep{Hills2014stc} and UST \citep{mbouopda2020ust} for their interpretability. For both methods, we kept every parameters to their default values except the \textit{minimum information gain} parameter which is the threshold used to decide if a separator is a valid shapelet. We tried different values for this parameter without success, none of these methods were able to find a single valid shapelet in the dataset. Since feature extraction was not successful, classification was not possible. This result is due to the dataset being highly imbalanced and the uncertain time series from different classes being too similar in shape. The same dimension of two randomly selected samples from two different classes is shown on Figure \ref{fig:two-classes-illustration}. The left figure which is a Supernova Type Ia-x (SNIax) looks like a left-shifted version of the right figure which is a Supernova Type Ia-91bg (SNIa-91bg). SNIax and SNIa-91bg are known to be difficult to distinguish by astrophysicists. This observation holds, with different magnitude, for other classes in the PLAsTiCC dataset and therefore, any shapelet-based methods might struggle to find shapelets in this dataset.

\begin{figure}[htbp]
	\centering
	\begin{subfigure}{.45\linewidth}
		\centering
		\includegraphics[width=.8\linewidth]{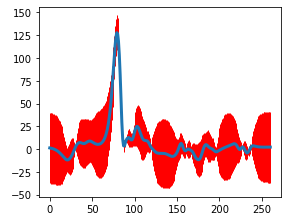}
		\caption{Supernova Type Ia-x}
	\end{subfigure}
	\begin{subfigure}{.45\linewidth}
		\centering
		\includegraphics[width=.8\linewidth]{plasticc_g_sample_class67.png}
		\caption{Supernova Type Ia-91bg}
	\end{subfigure}
	\caption{Two supernova from PLAsTiCC. They look similar in terms of shapes although they are from distinct classes.}
	\label{fig:two-classes-illustration}
\end{figure}

\subsubsection{SAST-based methods results:} For this experiment we considered different SAST+ configurations in order to measure the effect of taking uncertainty into account, dropping duplicates and counting the number of occurrences of patterns (i.e patterns frequency). We named configurations that ignore uncertainty as SAST<X> and those which take uncertainty into account as uSAST<X>, where <X> is either : \textit{i)} an empty string to specify that duplicate subsequences are not removed and the patterns frequency is ignored; \textit{ii)} the character \textit{d}, meaning that duplicate patterns are removed; \textit{iii)} the string \textit{dc}, meaning that duplicate patterns are removed and the frequency of patterns is taken into account.

We use three different supervised classifiers, namely Random Forest (RF), eXtreme Gradient Boosting (XGBoost) and the Ridge regression with Leave-One-Out cross-validation (RidgeCV). The cross-validation procedure is used to find the best regularization parameter. We set the minimum and maximum subsequence lengths to $20$ and $60$ respectively, with a step of $10$. Compared to a step of $1$, a step of $10$ reduces the chance of having similar subsequences while reducing the number of subsequences to be used. We observed that the classification performance is better with this setup as can be seen in the supplementary material. \textcolor{\hlColor}{The $\epsilon$-similarity is computed using $\epsilon=0.25$. This value was selected through a grid search over a set of predefined candidates. Our experiments showed that values above $0.5$ discard a substantial number of relevant subsequences and consequently reduce classification accuracy, whereas values below $0.5$ fail to prune enough subsequences to produce any meaningful reduction in computation time.} The parameters of the classifiers are left to their default values, except for the regularization parameter in RidgeCV which is selected using cross-validation. \textcolor{\hlColor}{As \citet{mbouopda2021sast} demonstrated that additional reference time series per class offer limited accuracy improvement at the cost of higher computational overhead, we follow their setting and use $k=1$. We have also considered every possible subsequence length starting from 3 to the time series length, letting the classifier automatically select the most relevant ones. This is computationally feasible because we are retrieving the subsequences from the reference time series and not from the full dataset}. As the reference time series are chosen randomly, we run each experiment $3$ times and we report the average precision, recall, F1 score, cross entropy loss and the time taken for training and inference (in hours). \textcolor{\hlColor}{We tested multiple alternative sampling strategies—including class prototypes and sampling several instances per class—but our experiments showed no accuracy or runtime gains relative to the simple strategy proposed in the original SAST paper}. As PLAsTiCC is an imbalanced multiclass dataset, we use a weighted average to compute the precision, recall and F1 score; the weights being the percentage of each class in the dataset. Table \ref{tab:usast-results} shows the result using the XGBoost classifier only as it has led to the best classification performance. However, detailed results are available in the supplementary material.

\begin{table}[htbt]
	\centering
	\caption{Results on PLAsTiCC averaged over $3$ runs.}\label{tab:usast-results}
	\begin{tabular}{l|c|c|c|c|c}
		\toprule
		& Precision & Recall & F1 score & LogLoss & Time (h) \\
		\midrule
		uSAST & $ 0.72 \pm 0.01 $ & $ 0.72 \pm 0.00 $ & $ 0.69 \pm 0.01 $ & $ 0.96 \pm 0.01 $ & $ 51.03\pm 0.12 $ \\
		\midrule
		\textbf{uSASTd} & $ \mathbf{0.72 \pm 0.00} $ & $ \mathbf{0.73 \pm 0.00} $ & $ \mathbf{0.70 \pm 0.01} $ & $ \mathbf{0.97 \pm 0.01} $ & $ \mathbf{43.49 \pm 0.27} $ \\
		\midrule
		uSASTdc & $ 0.71 \pm 0.01 $ & $ 0.72 \pm 0.01 $ & $ 0.69 \pm 0.01 $ & $ 0.96 \pm 0.01 $ & $ 43.52 \pm 0.72 $ \\
		\bottomrule
	\end{tabular}
\end{table}

The first observation is that any variant of our proposed method is able to achieve around $70\%$ precision, recall and F1 score, unlike shapelet-based methods which completely failed on the PLAsTiCC dataset. This result corroborates with the claim that pruning subsequences before the effective classification could sometimes lead to poor performance \citep{mbouopda2021sast}. Dropping duplicates, counting patterns frequency or doing both does not have significant impact on the classification performance. However, dropping duplicate makes the models faster. In particular, uSASTd is about $12$ hours faster than uSAST. Counting pattern frequency does not add a computation overhead because it is done while computing the distance in $O(1)$ time. %We have also observed that the used classifier does not have influence on the running time as the main time is used for transforming the dataset.

Choosing the right subsequence lengths to considered is challenging and assessing all possible values is computationally expensive; However, domain knowledge could guide in setting this parameter as it is application-dependent.

% \begin{table*}[htbp]
	% 	\centering
	% 	\caption{uSAST<d> models results on the PLAsTiCC dataset average over $3$ runs.}\label{tab:usast-results-2}
	% 	\begin{tabular}{l|c|c|c|c|c}
		% 		\toprule
		% 		& Precision & Recall & F1 score & LogLoss & Time (h)\\
		% 		\midrule
		% 		uSAST & $ 0.72 \pm 0.01 $ & $ 0.72 \pm 0.00 $ & $ 0.69 \pm 0.01 $ & $ 0.96 \pm 0.01 $ & $ 51.03\pm 0.12 $ \\
		% 		\midrule
		% 		uSASTd & $ 0.72 \pm 0.00 $ & $ 0.73 \pm 0.00 $ & $ 0.70 \pm 0.01 $ & $ 0.97 \pm 0.01 $ & $ 43.49 \pm 0.27 $ \\
		% 		\midrule
		% 		uSASTdc & $ 0.71 \pm 0.01 $ & $ 0.72 \pm 0.01 $ & $ 0.69 \pm 0.01 $ & $ 0.96 \pm 0.01 $ & $ 43.52 \pm 0.72 $ \\
		% 		\bottomrule
		% 	\end{tabular}
	% \end{table*}

PLAsTiCC contains objects that are either galactic or extra-galactic, and whose light curves were obtained following a %and objects that are from 
Deep Drilling Fields (DDF) or Wide Fast Deep (WFD) observation strategy. Extra-galactic objects are further away than galactic ones, they are fainter and more difficult to be observed. DDF light curves % objects are observed 
contain more frequent observation points than WFD ones. Thus, DDF light curves provide a more certain determination of the time series properties than their WFD counterparts which have more uncertainties.  %with a high precision as the telescope focuses on a reduced section of the space in order to get more details. 
Table \ref{tab:usast-results-grouped} gives the performances of the model uSASTd regarding if the objects are galactic or not, DDF or WFD. The model is considerably better at classifying galactic objects than extra-galactic ones, and a little better at classifying DDF objects than WFD ones. While the model achieves an F1 score of $94\%$ for galactic objects in DDF, it achieves an F1 score of only $67\%$ for extra-galactic objects in WFD. This is directly related to the astrophysical nature of galactic objects. These are, in general, variables whose brightness go through many cycles within the 3 years covered by our data. On the other hand, extragalactic objects are dominated by transients, consisting of only 1 region of signal which never repeats, thus rendering a smaller quantity of information encoded in its time series.

\begin{table}[ht]
	\centering
	\caption{uSASTd performance regarding if the object are galactic or extra-galactic, are from the DDF or WFD.}\label{tab:usast-results-grouped}
	\begin{tabular}{l|l|ccc}
		\toprule
		&          &  Galactic &  Extra-galactic &  Both \\
		\midrule
		& Precision &      0.96 &          0.73 &  0.77 \\
		DDF & Recall &      0.94 &          0.75 &  0.79 \\
		& F1 score &      0.94 &          0.71 &  0.76 \\
		\midrule
		& Precision &      0.94 &          0.67 &  0.71 \\
		WDF & Recall &      0.84 &          0.64 &  0.71 \\
		& F1 score &      0.87 &          0.61 &  0.67 \\
		\midrule
		& Precision &      0.94 &          0.68 &  0.72 \\
		Both & Recall &      0.86 &          0.67 &  0.73 \\
		& F1 score &      0.88 &          0.64 &  0.70 \\
		\bottomrule
	\end{tabular}
\end{table}

The data set includes 6 classes with overall similar behavior (42, 52, 62, 67, 90, 95). Among these, astronomers are specially interested in type 90 (SNIa), which is used as distance indicator in cosmological analysis \cite{Ishida2019}. Reporting our results as a binary problem with class 90 against all others, we achieve $85\%$ precision, $81\%$ recall and $82\%$ F1 score. Therefore, our method is able to correctly classify a high proportion of SNIa despite its similar behavior to other classes.

\subsubsection{Ablation study:} Here, we study the impact of taking uncertainty into account. In particular, we compare the results obtained when uncertainty is ignored (Table \ref{tab:sast-results}) to the results obtained when uncertainty is taken into account (Table \ref{tab:usast-results}). 

\begin{table}[htbp]
	\centering
	\caption{Results on PLAsTiCC averaged over $3$ runs when uncertainty is ignored.}\label{tab:sast-results}
	\begin{tabular}{l|c|c|c|c|c}
		\toprule
		& Precision & Recall & F1 score & LogLoss & Time (h) \\
		\midrule
		SAST & $ 0.65 \pm 0.01 $ & $ 0.67 \pm 0.00 $ & $ 0.63 \pm 0.00 $ & $ 1.16 \pm 0.01 $ & $ 16.41 \pm 0.52 $ \\
		\midrule
		SASTd & $ 0.66 \pm 0.02 $ & $ 0.68 \pm 0.00 $ & $ 0.64 \pm 0.00 $ & $ 1.14 \pm 0.00 $ & $ 12.79 \pm 0.84 $\\
		\midrule
		SASTdc & $ 0.66 \pm 0.01 $ & $ 0.68 \pm 0.00 $ & $ 0.64 \pm 0.01 $ & $ 1.14 \pm 0.01 $ & $ 12.99 \pm 0.30 $ \\
		\bottomrule
	\end{tabular}
\end{table}

Taking uncertainty into account increases the classification performance in terms of precision, recall, F1 score and cross entropy loss. In fact, from SASTd to uSASTd, there is a gain of $6\%$ in precision, $5\%$ in recall, $6\%$ in F1 score. It can also be seen that the model is more confident on its predictions as the loss has decreased. However, this gain in performance requires almost four times more computation. 

\subsubsection{Comparison to SOTA:} In this subsection, we compare our proposed method to the state-of-the-art multivariate time series classification methods ROCKET \citep{Dempster2020ROCKET}, MUSE \citep{schafer2017muse} and XEM \citep{fauvel2022xem} which have been shown to be among the most accurate methods for this task \citep{ruiz2021great}. Results are shown in Table \ref{tab:usast-vs-sota}.

% \begin{table}[ht]
	% 	\centering
	% 	\caption{uSASTd vs SOTA results.}\label{tab:usast-vs-sota}
	% 	\resizebox{\linewidth}{!}{\begin{tabular}{l|c|c|c|c|c}
			% 		\toprule
			% 		 & Precision & Recall & F1 score & LogLoss & Time (h) \\
			% 		\midrule
			% 		uSASTd & $ 0.72 \pm 0.00 $ & $ 0.73 \pm 0.00 $ & $ 0.70 \pm 0.01 $ & $ 0.97 \pm 0.01 $ & $ 43.49 \pm 0.27 $ \\
			% 		\midrule
			% 		MUSE & $ 0.71 \pm 0.01 $ & $ 0.73 \pm 0.01 $ & $ 0.71 \pm 0.01 $ & $ 1.78 \pm 0.03 $ & $ 3.36 \pm 0.04 $\\
			% 		\midrule
			% 		ROCKET & $0.77 \pm 0.00 $ & $0.77 \pm 0.00$ & $0.75 \pm 0.00$ & $0.82 \pm 0.01$ & $0.05 \pm 00$\\
			% 		\midrule
			% 		XEM & $0.69 \pm 0.01$ & $0.71 \pm 0.00$ & $0.69 \pm 0.00$ & - & $12.24 \pm 0.46$ \\
			% 		\bottomrule
			% 	\end{tabular}}
	% \end{table}

\begin{table}[hbtp]
	\centering
	\caption{uSASTd vs SOTA results.}\label{tab:usast-vs-sota}
	\begin{tabular}{l|c|c|c|c}
		\toprule
		& Precision & Recall & F1 score & Time (h) \\
		\midrule
		uSASTd & $ 0.72 \pm 0.00 $ & $ 0.73 \pm 0.00 $ & $ 0.70 \pm 0.01 $ & $ 43.49 \pm 0.27 $ \\
		\midrule
		MUSE & $ 0.71 \pm 0.01 $ & $ 0.73 \pm 0.01 $ & $ 0.71 \pm 0.01 $ & $ 3.36 \pm 0.04 $\\
		\midrule
		ROCKET & $0.77 \pm 0.00 $ & $0.77 \pm 0.00$ & $0.75 \pm 0.00$ & $0.05 \pm 00$\\
		\midrule
		XEM & $0.69 \pm 0.01$ & $0.71 \pm 0.00$ & $0.69 \pm 0.00$ & $12.24 \pm 0.46$ \\
		\bottomrule
	\end{tabular}
\end{table}

The classification performance of our method is comparable to those of the SOTA methods. In particular, uSASTd achieves better precision, recall and F1 score compared to XEM on PLAsTiCC. uSASTd and MUSE have similar classification performance. ROCKET achieves the best classification performance. SOTA methods are faster than our proposal. Except for XEM which is explainable-by-design, SOTA methods are not explainable. In fact, ROCKET uses the proportion of positive values obtained after applying random convolutions. MUSE uses bag of words obtained after applying some transformations to the time series. These features have no particular meaning for domain experts. Our method does not have this limitation, as it is based on features that are intelligible to domain experts.

\subsubsection{Explainability:}One of the best properties of subsequence-based classification is its interpretability. %ease in explaining the model. 
The explanation could be done either locally, when it concerns only a single instance, or globally when it concerns the whole model. In any case, this is generally done by inspecting the model in order to extract the most discriminative subsequences \citep{Ye2009shapelet}. These subsequences could also be found using a post-hoc method such as LIME \citep{ribeiro2016should} or SHAP \citep{lundberg@2017}, but since our approach is explainable-by-design, inspecting the model is sufficient. More specifically, since the classifier used in our model is tree-based, the information gain can be used as a measure of the discriminative power of the subsequences similarly to what is done in shapelet-based methods. The local explainability of our method is obtained by inspecting the subsequence on  which the model focused the most in order to make the prediction for a single instance. Figure \ref{fig:local-interpretability} shows local explanations for a Supernova Type Ia (SNIa) and a Core-collapse Supernova Type II-P (SNII-P) correctly classified by the model. The ``P'' in the denomination of the latter references the plateau phase observed in its time-series just after maximum brightness. This feature is clearly shown in the bottom panel of Figure \ref{fig:local-interpretability}. This confirms that our model focuses on the relevant regions and dimensions of the time series to make the classification. 

\begin{figure}[htbp]
	\centering
	\includegraphics[width=\linewidth]{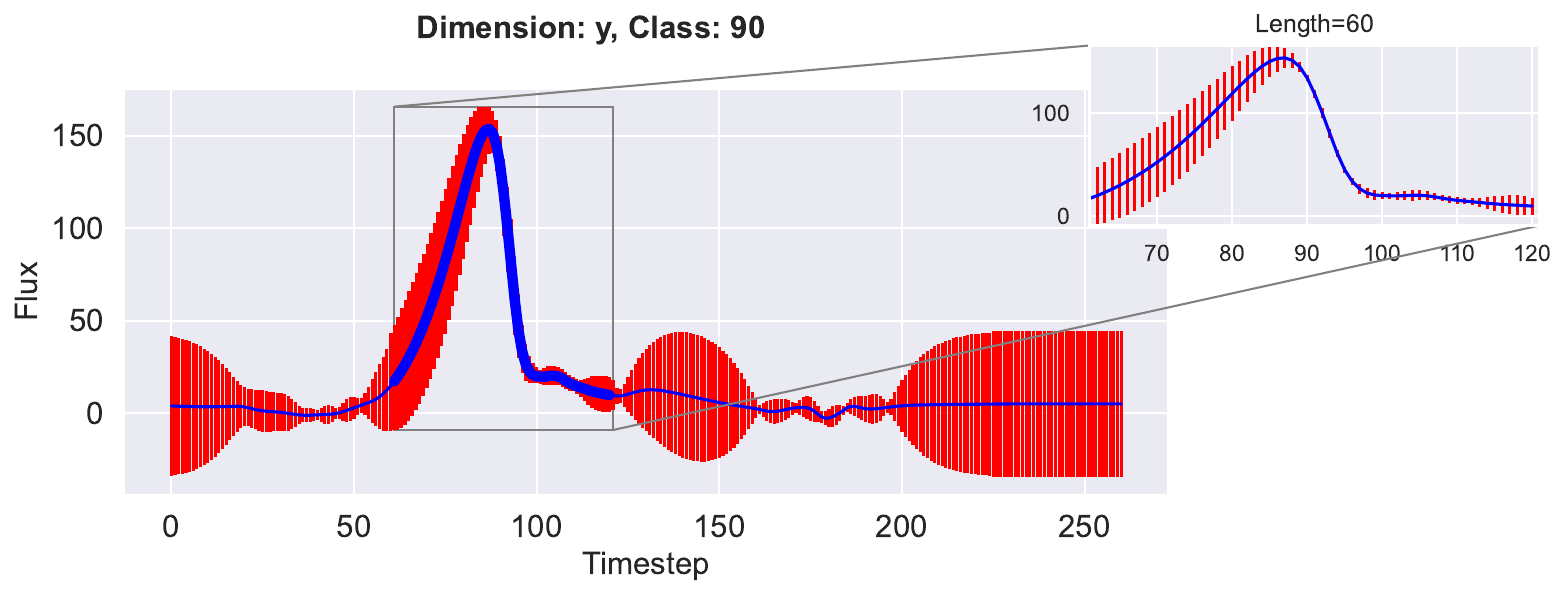}
	\includegraphics[width=\linewidth]{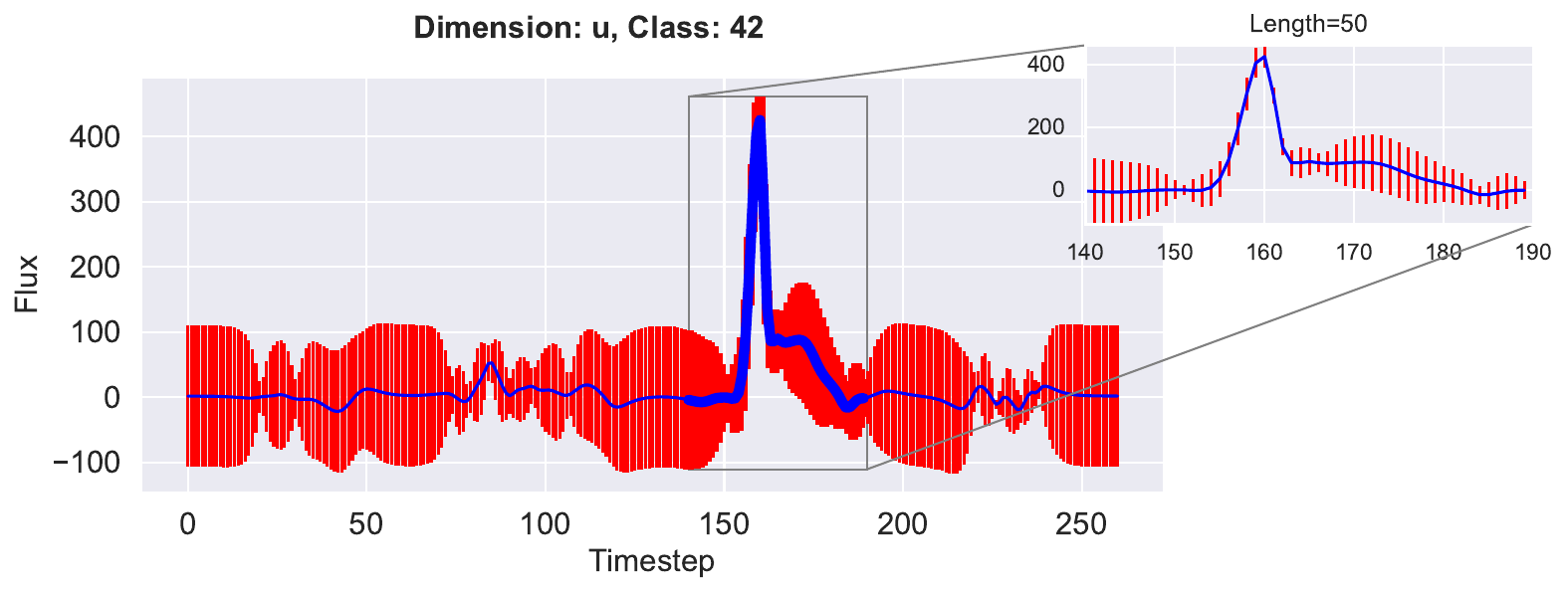}
	\caption{Local explainability of a Supernova Type Ia (top) and a Core-collapse Supernova Type II (bottom). \textcolor{\hlColor}{The y-axis shows the light intensity of the object while the x-axis represent the timestamp}.}.
	\label{fig:local-interpretability}
\end{figure}

Being able to correctly learn the dimension's relevance is crucial as the discriminative subsequence may appear only in a subset of the dimensions. Furthermore, the location of the discriminative subsequence may not be the same on every dimension. In PLAsTiCC in fact, depending how far is the object, the light may be visible only on some wavelengths (i.e. dimension). Due to the accelerated expansion of the universe, objects which are further away are also moving with a higher velocity. Thus, there is a Doppler effect in the observed light which shifts it to higher wavelengths. Thus, closer (galactic) objects will generally have higher signals in lower wavelengths than further away (extragalactic) ones. Our method perfectly captures the Doppler effect unlike XEM which cannot identify from which dimensions the discriminative subsequences is located. For instance, XEM found the most discriminative subsequence to be in the region starting at time step 136 and ending at 187 for an instance in class 62 which correspond to the class of Core-Collapse Supernova Type Ibc (SNIbc). This region is illustrated on Figure \ref{fig:xem-interpretability} for each of the six dimensions. Visualization confirms that this region contains the most important subsequences, which are located on dimensions \textit{i, r, y}, and \textit{z}. Additionally, it is observed that this region contains no signal on dimensions \textit{g} and \textit{u}, meaning that these dimensions are not relevant to classify this instance. It might be alright to make use of visualization to find out which dimensions really contain the important subsequence for a single sample, but this is infeasible for many instances, especially when the number of dimensions is high. uSAST handles this by automatically extracting the most important features and their corresponding dimensions.

\begin{figure}[hbtp]
	\centering
	\includegraphics[width=\linewidth]{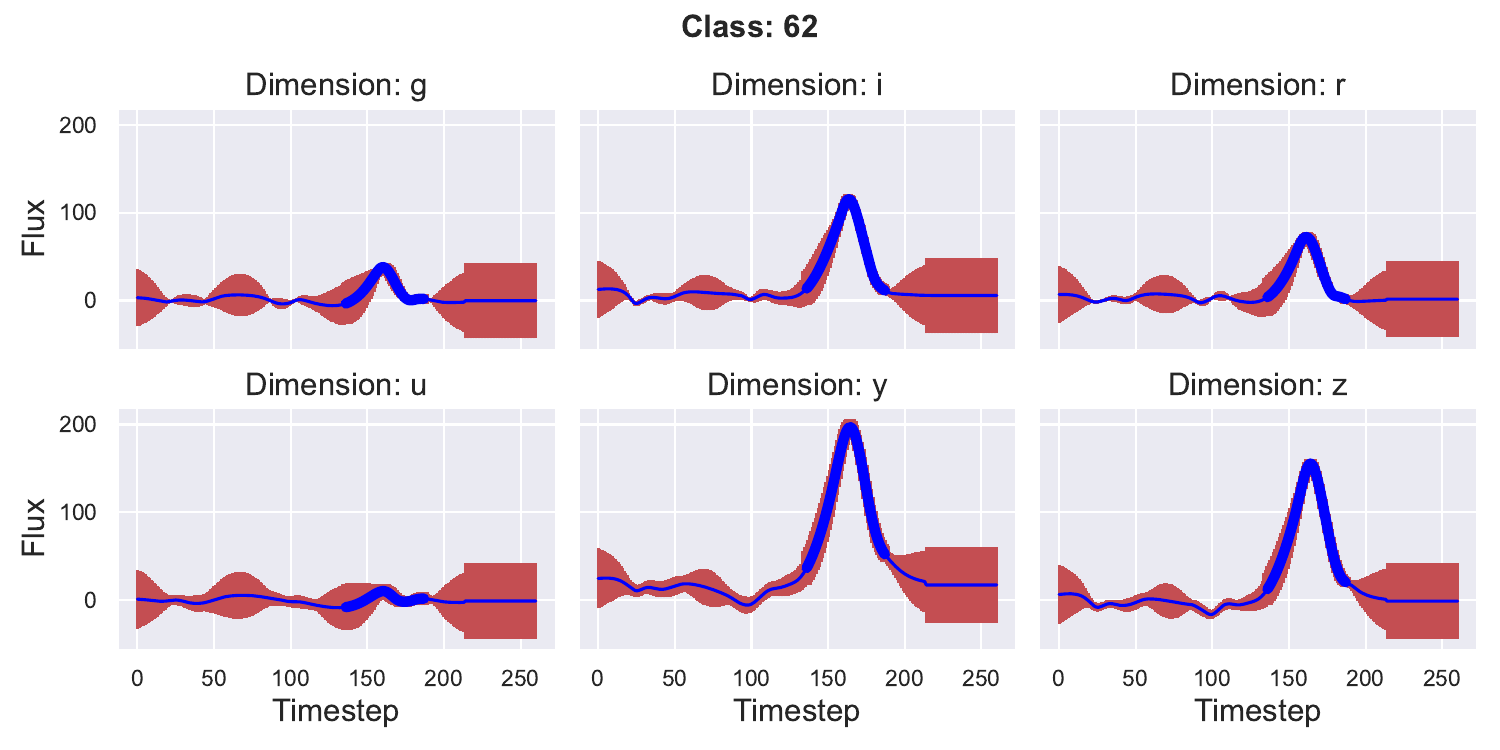}
	\caption{Local explainability of a Core-collapse Supernova Type Ibc by XEM. \textcolor{\hlColor}{The y-axis shows the light intensity of the object while the x-axis represent the timestamp}.}.
	\label{fig:xem-interpretability}
\end{figure}

A global explanation is obtained by building a subsequence-based profile of each of the class.  The top $20$ most discriminative subsequences from the uSASTd model are shown in Figure \ref{fig:top-plasticc-subsequences}. Subsequences that are from the same class label are plotted with the same color, % and for each subsequence, 
its rank, its class label and its type are given at the top of its corresponding plot. The type is either \textit{Value} if the discriminative power comes from the value itself or \textit{Uncertainty} if the discriminative powers comes from the uncertainty. The dimension from which the subsequences are coming from are also given on the figure.

\begin{figure}[htbp]
	\centering
	\includegraphics[width=\linewidth]{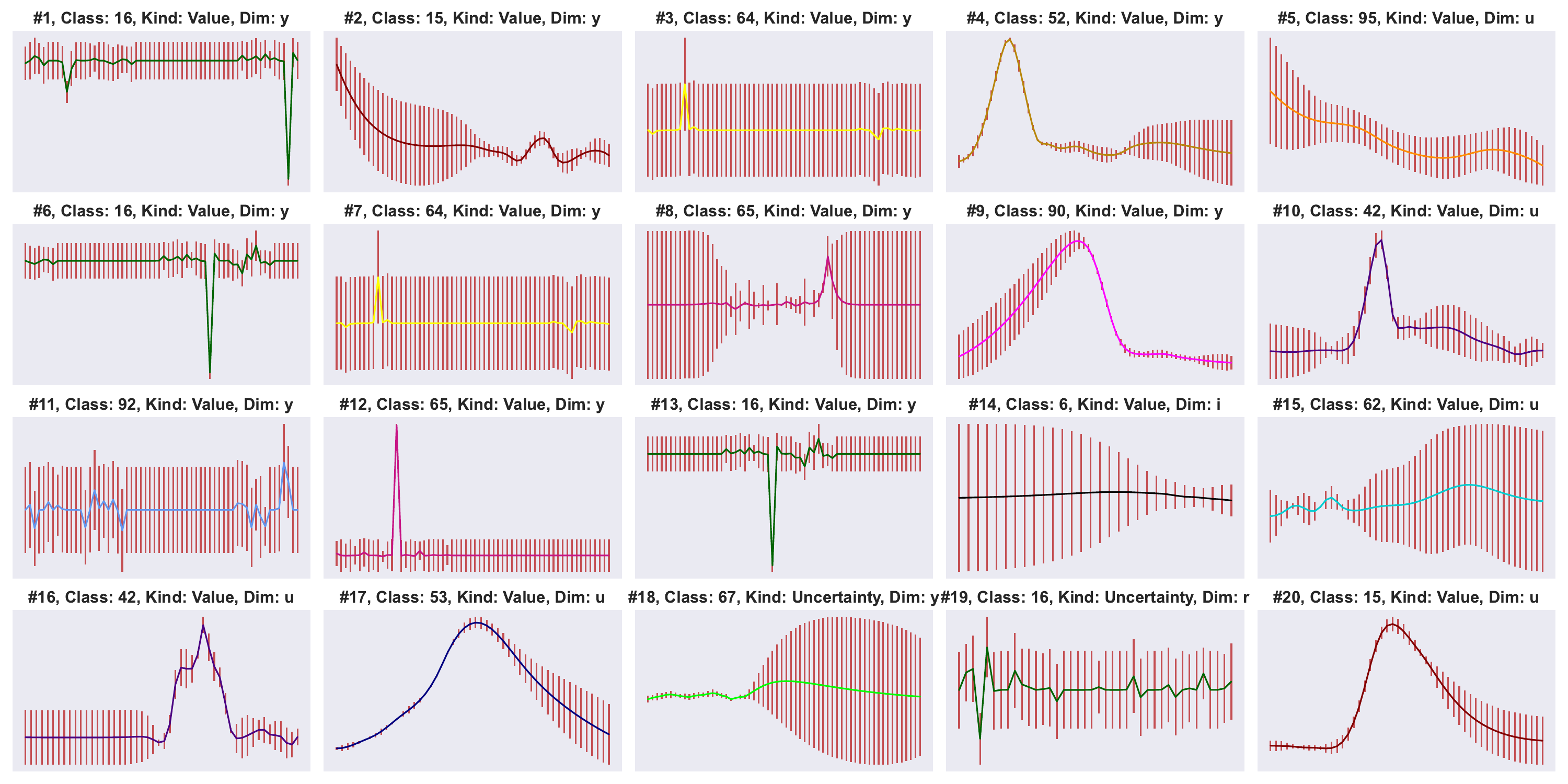}
	\caption{The top $10$ most discriminative subsequences in the PLAsTiCC dataset. \textcolor{\hlColor}{The y-axis shows the light intensity of the object while the x-axis represent the timestamp}.}\label{fig:top-plasticc-subsequences}
\end{figure}

It is observed that the discriminative power is generally due to the value, but sometimes it is due to the uncertainty (for example subsequences \#18 and \#19). Seeing that some subsequences are important because of their uncertainty emphasizes the fact that taking uncertainty into account is important and improves the classification performance. There are also some subsequences that are too similar despite the fact that duplicate subsequences have been dropped; for instance, the subsequences \#3 and \#7. This is because the similarity between subsequences is computed using the Uncertain Euclidean Distance (UED) which considers the subsequences to be perfectly aligned. This problem can be resolved by using an elastic distance such as the DTW distance at the cost of more computational time since such distances generally have at least quadratic time complexity while UED is linear.
From the domain knowledge point of view, these discriminative subsequences are able to grasp the important shapes commonly associated with their respective class of astronomical transients. Subsequences \#1 and \#6 were taken from class 16 (eclipsing binary) and clearly show the expected light curve from a well measured binary system where one star eclipses the other exactly in the line of sight, thus leading to a decrease in brightness. Subsequence \#19 is also associated to the eclipsing binary class, but in this case the signal is less clear, corresponding to an object which is further away -- thus leading to low signal and large uncertainties. We also call attention to the supernova-like behavior exhibited by subsequences \#4 and \#9 -- one single burst events whose brightness are only visible for weeks to months. The fact that such characteristic behaviors are easily spotted in the list of most important subsequences certifies that our final classification results are in line with the expert definition of such classes and hence, shows that our model is safe and trustworthy. Moreover, further investigations of a more extensive list of important subsequences have the potential to reveal unexpected time series shapes and promote the development of more detail theoretical models for such astrophysical sources.

\section{Conclusion and future directions}\label{sec:conclusion}
The classification of time series with available uncertainty measures is an under-explored and challenging task. In this work, we proposed an approach to perform this task with a global F1 score of $70\%$, without using techniques such as data augmentation nor oversampling. The explainability of the proposed approach allows domain experts to not only understand individual predictions, but also to characterized each class by a set of subsequences with high discriminative power, which can then be used to perform other important tasks in astrophysics such as novel astronomical transients detection and anomaly detection. The ablation study shown the positive impact of taking uncertainty into account. A limitation of the approach is the time complexity, which could be considerably high for datasets with relatively long uncertain time series. A future direction would consist of further reducing the number of subsequences to be used and optimizing the computation time of the method. Another future direction would consist of finding a better way of managing uncertainty during the classification step in order to improve the performances. Nevertheless, the results presented in this work illustrate how our approach is effective in identifying meaningful subsequences which, beyond the classification performance, can provide important information to the expert. The approach is flexible enough to be applied to other scientific domains where uncertain time series are the common, thus enabling future advances in multiple uncertainty-related subject areas. 

% To print the credit authorship contribution details
\printcredits

%% Loading bibliography style file
%\bibliographystyle{model1-num-names}
\bibliographystyle{cas-model2-names}

% Loading bibliography database
\bibliography{bib}% 

% Biography
%\bio{}
% Here goes the biography details.
%\endbio

%\bio{pic1}
% Here goes the biography details.
%\endbio

\end{document}